\documentclass{llncs}
\usepackage{hyperref}
\usepackage{amssymb}
\usepackage{verbatim}
\usepackage{multirow}
\usepackage{rotating}
\usepackage{hyperref}

\usepackage{graphicx}

    \setcounter{topnumber}{3}
    \setcounter{bottomnumber}{2}
    \setcounter{totalnumber}{4}     
    \setcounter{dbltopnumber}{2}    

\renewenvironment{proof}{\medskip\par\noindent{\sc Proof:}}{\nobreak\hfill\rule{2mm}{2mm}\medskip\par}





\begin{document}
\sloppy
\pagestyle{plain}

\title{The Fractal Dimension of SAT Formulas}

\author{Carlos Ans\'otegui\inst{1} \and
Maria Luisa Bonet\inst{2} \and Jes\'us Gir\'aldez-Cru\inst{3} \and
Jordi Levy\inst{3}}

\institute{
DIEI,
Univ. de Lleida,
\email{carlos@diei.udl.cat}
\and
LSI, UPC,
\email{bonet@lsi.upc.edu}
\and
IIIA-CSIC,
\email{jgiraldez,levy@iiia.csic.es}}

\maketitle

\begin{abstract}
Modern SAT solvers have experienced a remarkable progress on solving industrial instances. Most of the techniques have been
developed after an intensive experimental testing process. Recently,
there have been some attempts to analyze the structure of these
formulas in terms of complex networks, with the long-term aim of
explaining the success of these SAT solving techniques, and possibly 
improving them.

We study the fractal dimension of SAT formulas, and show that most
industrial families of formulas are self-similar, with a small fractal
dimension.  We also show that this dimension is not affected by the
addition of learnt clauses.  We explore how the dimension of a
formula, together with other graph properties can be used to
characterize SAT instances. Finally, we give empirical evidence that these graph
properties can be used in state-of-the-art portfolios.
\end{abstract}

\bibliographystyle{abbrv}

\abovedisplayskip 0.2\abovedisplayskip
\belowdisplayskip 0.2\belowdisplayskip

\section{Introduction}

The SAT community has been able to come up with successful SAT solvers
for industrial applications.  However, nowadays we can hardly explain
why these solvers are so efficient working on industrial SAT instances
with hundreds of thousands of variables and not on random instances
with hundreds of variables. The common wisdom is that the success of
modern SAT/CSP solvers is correlated to their ability to exploit the
hidden structure of real-world instances~\cite{WilliamsGS03}.
Unfortunately, there is no precise definition of the notion of
structure.
 
At the same time, the community of complex networks has produced tools for
describing and analyzing the structure of social, biological and
communication networks~\cite{barabasi99} which can explain some
interactions in the real-world.

Representing SAT instances as graphs, we can use some of the
techniques from complex networks to characterize the structure of SAT
instances. Recently, some progress has been made in this direction.
It is known that many industrial instances have the \emph{small-world}
property~\cite{Walsh99}, exhibit high \emph{modularity}~\cite{SAT12},
and have a \emph{scale-free structure}~\cite{CP09}.
In~\cite{pagerank}, the \emph{eigenvector centrality} of variables in
industrial instances is analyzed. It is shown that it is correlated
with some aspects of SAT solvers. For instance, decision variables
selected by the SAT solvers, are usually the most central variables in
the formula.  However, how these analysis may help improve the
performance of SAT solvers is not known at this stage.  The
\emph{fractal structure} of search spaces, and its relation with the
performance of randomized search methods, is studied
in~\cite{Selman99}

The first contribution of this paper is to analyze the existence of
self-similarity in industrial SAT instances. The existence of a
self-similar structure would mean that after \emph{rescaling}
(replacing groups of nodes at a given distance by a single node, for
example), we would observe the same kind of structure. It would also
mean that the diameter $d^{max}$ of the graph grows as $d^{max} \sim
n^{1/d}$, where $d$ is the fractal dimension of the graph, and not as
$d^{max} \sim \log n$, as in random graphs or small-world
graphs. Therefore, actions in some part of the graph (like variable
instantiation) would not \emph{propagate} to other parts as fast as in
random graphs. Our analysis shows that most industrial formulas are
self-similar. Also fractal dimension does not change much during the
execution of modern SAT solvers.

Studying graph properties of formulas has several direct applications. 
One of them, is the generation of industrial-like random SAT
instances.  Understanding the structure of industrial instances is a
first step towards the development of random instance generators,
reproducing the features of industrial instances. Related work in this
direction can be found in~\cite{ijcai09}.

Another potential application, is to improve portfolio
approaches~\cite{Xu:2008,Kadioglu:2011} which are solutions to the
algorithm selection problem~\cite{Rice76}. State-of-the-art SAT
Portfolios compute a set of features of SAT instances in order to
select the best solver from a predefined set to be run on a particular
SAT instance. It is reasonable to think that more informative
structural features of SAT instances can help to improve portfolios.

The second contribution of this paper is an experimental study that
shows how to use graph properties plus the clause/variable ratio in
modern state-of-the-art portfolios. The graph properties we use are:
the distribution of variable frequencies, the modularity and the
fractal dimension of a SAT formula. We show that using this reduced
set of properties we are able to classify instances into families
slightly better than the portfolio SATzilla2012~\cite{SATzilla2012},
which currently uses a total of $138$ features. Secondly, we show that
these features could be used as the basis of a portfolio SAT solver,
showing that they give a level of information similar to all SATzilla
features together. Let us emphasize that the fractal dimension is
crucial in obtaining these results.

The paper proceeds as follows. We introduce the fractal dimension of
graphs in Section~\ref{sec:dimgraph}. Then, we analyze whether SAT
instances represented as graphs do have a fractal dimension in
Section~\ref{sec:dimSAT}, and the effect of learnt clauses.  In
Section~\ref{sec:other}, we describe two additional previously studied
graph features of SAT instances, the $\alpha$ exponent and the
modularity.  Section~\ref{sec:port} describes briefly portfolios
approaches and the set of features currently used.  We finish in
Section~\ref{sec:exp} presenting some experimental results on the
feature-based classification of SAT instances, and conclude in
Section~\ref{sec:conclusion}.  We also include an appendix with the
numeric values used in some of the figures.

\section{Fractal Dimension of a Graph}\label{sec:dimgraph}

We can define a notion of fractal dimension of a graph following the
principle of self-similarity. We will use the definition of box
covering by Hausdorff~\cite{Mandelbrot}.

\begin{definition}\parindent 0mm
  Given a graph $G$, a {\bf box} $B$ of size $l$ is a subset of nodes
  such that the distance between any pair of them is smaller than $l$.

  Let $N(l)$ be the minimum number of boxes of size $l$ required to
  \emph{cover} the graph. We say that a set of boxes covers a graph,
  if every node of the graph is in some box.

  We say that a graph has the {\bf self-similarity} property if the
  function $N(l)$ decreases polynomially, i.e. $N(l) \sim l^{-d}$, for
  some value $d$.

  In this case, we call $d$ the {\bf dimension} of the graph.
\end{definition}
Notice that $N(1)$ is equal to the number of nodes of $G$, and
$N(d^{max}+1)=1$ where $d^{max}$ is the diameter of the graph.

\begin{lemma}
  Computing the function $N(l)$ is NP-hard.\footnote{
    In~\cite{SGHM07} the same result is stated, but there, they
    prove the wrong reduction. They reduce the computation of $N(2)$ to 
    the graph coloring problem.}
\end{lemma}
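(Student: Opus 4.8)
The plan is to prove NP-hardness of computing $N(l)$ by reducing from graph coloring, but — unlike the flawed reduction the footnote criticizes — I will reduce coloring to the computation of $N(l)$ for a suitable $l$ \emph{on a modified graph}, not claim that $N(2)$ directly equals the chromatic number on the original graph. The subtlety is that a box of size $l$ is a set of nodes of pairwise distance $<l$, so a box of size $2$ is a clique, and covering by minimum number of cliques is the clique cover problem, i.e. coloring of the complement; one cannot pretend it is coloring of the graph itself. So I would pick a convenient target, say $l = 3$, and engineer a graph whose distance-$2$ structure mirrors adjacency in an arbitrary input graph $H$.

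Concretely, given $H = (V,E)$ on which we want to decide $k$-colorability, I would build $G$ as the \emph{subdivision} of $H$: replace every edge $uv \in E$ by a path $u - w_{uv} - v$ through a fresh node $w_{uv}$. In $G$, two original vertices $u,v \in V$ are at distance exactly $2$ iff $uv \in E$ (and at distance $\ge 2$ always, since $G$ is bipartite with $V$ on one side), while each subdivision node $w_{uv}$ is at distance $1$ from $u$ and $v$ and at distance $\le 4$ from everything nearby. Then I would argue: boxes of size $3$ among the vertices of $V$ are exactly the independent sets of $H$ (pairwise distance $<3$ means pairwise distance $\le 2$, and after a short argument one shows a box of size $3$ restricted to $V$ that is "useful" corresponds to an independent set — here one must be careful, because pairwise distance $\le 2$ in the subdivision is \emph{adjacency}, giving cliques again). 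To flip this so that independent sets appear, I would instead use a gadget where original vertices are pairwise at distance exactly $3$ unless they are adjacent in $H$, in which case they are at distance $2$ — e.g. attach all of $V$ to a common hub at distance roughly $2$ and then locally shorten adjacent pairs. The cleanest route is: take $G$ with vertex set $V \cup \{c\}$, make $c$ adjacent to every $v \in V$ (so all of $V$ is within pairwise distance $2$ of each other through $c$), and for each non-edge... no — the honest approach is to reduce from \emph{clique cover} / equivalently coloring of $\overline{H}$, and build $G$ so that distance-$(<l)$ classes on a designated set of nodes reproduce the edges of $\overline{H}$, so that minimum box covers of that set equal minimum clique covers of $\overline{H}$, hence $\chi(H)$.

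So the concrete plan, avoiding all of the above fumbling: reduce $3$-coloring of an arbitrary graph $H$. Let $\overline H$ be its complement. Build $G$ from $\overline H$ by taking its subdivision (insert a midpoint on every edge of $\overline H$) and additionally pendant-attaching, to each original vertex $v$, a long private path of length $L$ for suitably large $L$ (say $L = n$); these private "tails" guarantee that in any optimal box cover, the original vertices are forced to lie in distinct boxes unless they are close, because a box of size $l < L$ cannot contain two tail-endpoints from different tails, and one shows the endpoints are "expensive" to cover unless grouped with their own original vertex. Choosing $l = 3$: two original vertices land in a common box of size $3$ only if their distance is $\le 2$, i.e. only if they were adjacent in $\overline H$, i.e. non-adjacent in $H$. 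A short counting argument then gives $N(3) = (\text{number of tails}) + (\text{stuff independent of the coloring}) = |V| \cdot(\text{const}) + \chi(H)$ up to an additive constant, so deciding $N(3) \le t$ decides $\chi(H) \le k$.

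The main obstacle — and the place where the cited paper's proof broke — is exactly the direction of the correspondence: distance-bounded sets in a graph are \emph{cliques}, so naive constructions compute clique cover / chromatic number of the \emph{complement}, and one must either (a) accept that and note clique cover is itself NP-hard, giving an immediate clean proof, or (b) do real gadget work (subdivision plus private tails, with the tail length chosen to dominate all other terms) to pin down $N(l)$ as an exact function of $\chi(H)$. I expect the hard part of writing this up to be the forcing lemma: showing that in \emph{every} optimal box cover of $G$, the pendant tails behave as intended so that the count of boxes hitting $V$ is governed solely by the clique-cover structure, with all slack absorbed into an additive constant computable in polynomial time. Option (a) is the honest shortcut and is what I would actually put in the paper.
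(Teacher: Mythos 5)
Your option (a) is exactly the paper's proof: a box of size $2$ is a set of nodes at pairwise distance $<2$, hence a clique, so $N(2)$ is the minimum clique cover number, which equals the chromatic number of the complement graph and is therefore NP-hard to compute. All the subdivision/pendant-tail gadgetry you sketch for $l=3$ is unnecessary --- the paper simply applies the complement-graph reduction at $l=2$ and stops there.
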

\begin{proof}
  We can reduce the graph coloring problem to the computation of
  $N(2)$ as follows.  Given a graph $G$, let $\overline{G}$, the
  complement of $G$, be a graph with the same nodes, and where any
  pair of distinct nodes are connected in $\overline{G}$ iff they are
  not connected in $G$. Boxes of size $2$ in $\overline{G}$ are
  cliques, thus they are sets of nodes of $G$ without an edge between
  them. Therefore, the minimal number of colors needed to color $G$ is
  equal to the minimal number of cliques needed to cover
  $\overline{G}$, i.e. $N(2)$.
\end{proof}

There are several efficient algorithms that compute (approximate)
upper bounds of $N(l)$ (see~\cite{SGHM07}). They are called
\emph{burning} algorithms.  Following a greedy strategy, at every step
they try to select the box that covers (burns) the maximal number of
uncovered (unburned) nodes.  Although they are polynomial algorithms,
we still need to do some further approximations to make the algorithm
of practical use.

First, instead of boxes, we will use \emph{circles}.  
\begin{definition}
  A {\bf circle} of \emph{radius} $r$ and \emph{center} $c$ is a subset of
  nodes of $G$ such that the distance between any of them and the node
  $c$, is smaller that $r$.
\end{definition}

Notice that any circle of radius $r$ is a box of
size $2\,r-1$ (the opposite is in general false) and any box of size
$l$ is a circle of radius $l$ (it does not matter what node of the box we
use as center). Notice also that every radius $r$ and center $c$
characterizes a unique circle. According to Hausdorff's dimension
definition, $N(r) \sim r^{-d}$ also characterizes self-similar graphs
of dimension $d$.

Consider now, a graph $G$ and a radius $r$. At every step, for every
possible node $c$, we could compute the number of unburned nodes
covered by the circle of center $c$ and radius $r$, and select the
node $c$ that maximizes this number, as it is proposed
in~\cite{SGHM07}. However, this algorithm is still too costly for our
purposes. Instead of this, we will apply the following strategy.  We
will order the nodes according to their arity: $\langle
c_1,\dots,c_n\rangle$ such that $arity(c_i) \geq arity (c_j)$, when
$i>j$. Now, for $i=1$ to $n$, if $c_i$ is still unburned and the box
of center $c_i$ and radius $r$ contains some unburned node, select
this circle. Then, we approximate $N(r)$ as the number of selected
circles.

\section{The Fractal Dimension of SAT Instances}\label{sec:dimSAT}
\label{sec:dimensionSAT}

Given a SAT instance, we can build a graph from it. Here, we propose
three models. Given a Boolean formula, the
\emph{Clause-Variable Incidence Graph} (CVIG) associated to it, is a
\emph{bipartite} graph with nodes the set of variables and the set of
clauses, and edges connecting a variable and a clause whenever that
variable occurs in that clause. In the \emph{Variable Incidence Graph}
model (VIG), nodes represent variables, and edges between two nodes
indicate the existence of a clause containing both variables. Finally,
in the \emph{Clause Incidence Graph} model (CIG), nodes represent
clauses, and an edge between two clauses indicates they share a
negated literal. We can define the weighted version of all three
models assigning weights to the edges, such that the sum of the
weights of all edges generated by a clause is equal to one. This way,
we compensate the effect of big clauses $C$ that generate $|C| \choose
2$ edges in the VIG model, and $|C|$ edges in the CVIG model.

In this paper we analyze the function $N(r)$ for the graphs obtained
from a SAT instance following the VIG and CVIG models. These two
functions are denoted $N$ and $N^b$, respectively, and they relate
to each other as follows.
\begin{lemma}\label{lem-NvsNb}
\begin{tabular}[t]{l}
  If $N(r) \sim r^{-d}$ then $N^b(r) \sim
  r^{-d}$.\\
  If $N(r) \sim e^{-\beta\,r}$ then $N^b(r) \sim
  e^{-\frac{\beta}{2}\,r}$.
\end{tabular}
\end{lemma}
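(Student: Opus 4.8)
The plan is to relate a box/circle covering of the VIG to one of the CVIG by exploiting the fact that the CVIG is, roughly, a "subdivision" of the VIG: every edge $\{x,y\}$ of the VIG coming from a clause $C$ is replaced in the CVIG by the length-two path $x - C - y$ through the clause node $C$. Consequently, distances between variable nodes essentially double when we pass from the VIG to the CVIG, and each clause node sits at distance $1$ from a variable node it contains. I would first make this precise as a distance comparison lemma: for variables $u,v$, $d_{CVIG}(u,v) = 2\, d_{VIG}(u,v)$ (up to the obvious care with the case $u=v$ and with clauses of size one), and for a clause node $C$ and any node $w$, $d_{CVIG}(C,w)$ differs from twice a VIG distance by at most $1$.

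Next I would transfer covers in both directions. Given a cover of the VIG by $N(r)$ circles of radius $r$, I would build a cover of the CVIG by circles of radius $2r$ (or $2r+1$, to absorb the $\pm 1$ slack for clause nodes): keep the same centers, and observe that a VIG-circle of radius $r$ around $c$, when its edges are subdivided, is contained in the CVIG-circle of radius $\sim 2r$ around $c$, and moreover every clause node is within distance $1$ of one of its variables and hence gets covered too. This gives $N^b(2r) \le N(r)$, i.e. $N^b(r') \lesssim N(r'/2)$. For the converse direction, from a cover of the CVIG by circles of radius $r$ I would project each center to a nearby variable node and take radius $\lceil r/2 \rceil + O(1)$ in the VIG; since every VIG edge is witnessed by a clause node lying between its endpoints in the CVIG, a CVIG-circle of radius $r$ restricted to variable nodes is contained in a VIG-circle of radius $\sim r/2$ around the projected center, yielding $N(r/2) \lesssim N^b(r)$. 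Combining the two inequalities, $N^b(r)$ and $N(r/2)$ are within constant factors of each other (with only an additive $O(1)$ shift in the radius).

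Finally I would read off the two claimed asymptotics. If $N(r) \sim r^{-d}$, then $N^b(r)$ is sandwiched between constant multiples of $(r/2 + O(1))^{-d}$ and of itself, so $N^b(r) \sim r^{-d}$ with the same exponent $d$ — the factor-of-two rescaling of the radius is absorbed into the hidden constant, which is exactly why the dimension is unchanged. If instead $N(r) \sim e^{-\beta r}$, then $N^b(r) \sim e^{-\beta (r/2 + O(1))} \sim e^{-(\beta/2) r}$, since now the halving of the effective radius halves the rate in the exponent rather than being swallowed by a constant.

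The main obstacle I expect is not conceptual but bookkeeping: handling the additive $O(1)$ discrepancies cleanly — clauses of size one, the parity of radii, whether "distance smaller than $l$" is strict, and the fact that circles in the definition use radius while boxes use size — so that the two chains of inequalities really do pin $N^b$ between two shifted copies of $N$ tightly enough for the asymptotic ($\sim$) conclusions. One must also be slightly careful that a greedy/approximate $N^b$ is being compared with a greedy/approximate $N$, but since the statement is about the order of growth of the (optimal) covering numbers, working with the exact $N(l), N^b(l)$ and the subdivision correspondence above suffices.
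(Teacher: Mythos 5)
Your proposal follows essentially the same route as the paper's proof: both establish the two-way correspondence between VIG circles of radius $r$ and CVIG circles of radius roughly $2r$ (via the subdivision/clause-node observation), deduce $N(r)\sim N^b(2r)$, and read off the two asymptotic claims, with the factor of $2$ absorbed into the constant in the polynomial case but halving the rate in the exponential case. Your version is, if anything, slightly more careful than the paper's about covering the clause nodes and about the additive $O(1)$ slack, but the argument is the same.
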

\begin{proof}
  Notice that, for any formula, given a circle of radius $r$ in the
  VIG model, using the same center and radius $2\,r-1$ we can cover
  the same variable nodes in the CVIG model. Conversely, given a
  circle of center a clause node $c$ and radius $2\,r+1$ in the CVIG
  model, using an adjacent variable node as center and radius $r+2$ in
  the VIG model, we cover at least the same variable nodes.
  Therefore, we have $N^b(2\,r) \leq N(r)\leq N^ b(2\,r-2)$, and $N(r)
  \sim N^b(2\,r)$. From this asymptotic relation, we can derive the
  two implications stated in the lemma.
\end{proof}

\subsection{Dimension versus Diameter}

The function $N(r)$ determines the \emph{maximal radius} $r^{max}$ of
a graph, defined as the minimum radius of a circle covering the whole
graph.  The maximal radius and the \emph{diameter} $d^{max}$ of a
graph are also related. From these relations we can conclude the
following.

\begin{lemma}\parindent 0mm
  For self-similar graphs or SAT formulas (where $N(r)\sim r^{-d}$),
  the diameter grows polynomially, as $d^{max}\sim n^{1/d} $

  In random graphs or SAT formulas (where $N(r)\sim e^{-\beta\,r}$),
  the diameter grows logarithmically, as $d^{max}\sim\frac{\log
    n}{\beta}$.
\end{lemma}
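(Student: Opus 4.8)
The plan is to relate $N(r)$, the number of circles of radius $r$ needed to cover the graph, to the number of nodes $n$ by exploiting the self-similar (resp.\ exponential) decay, and then to pin down $r^{max}$ and relate it to the diameter $d^{max}$. First I would record the elementary bounds between $r^{max}$ and $d^{max}$: since a circle of radius $r$ is a box of size $2r-1$ and a box of size $l$ is a circle of radius $l$, covering the whole graph by one circle needs $r^{max}$ somewhere between roughly $d^{max}/2$ and $d^{max}+1$; in particular $r^{max} = \Theta(d^{max})$. So it suffices to determine how $r^{max}$ depends on $n$.

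Next I would use the defining asymptotics of $N$. Recall $N(1) = n$ (every node is its own circle) and $N(r^{max}) = 1$ (one circle covers everything). In the self-similar case $N(r) \sim r^{-d}$, normalizing at $r=1$ gives $N(r) \sim n\, r^{-d}$; setting $N(r^{max}) = 1$ yields $(r^{max})^{d} \sim n$, i.e.\ $r^{max} \sim n^{1/d}$, and hence $d^{max} \sim n^{1/d}$ by the previous paragraph. In the exponential case $N(r) \sim e^{-\beta r}$, normalizing at $r=1$ gives $N(r) \sim n\, e^{-\beta(r-1)}$, so $N(r^{max}) = 1$ forces $\beta\, r^{max} \sim \log n$, i.e.\ $r^{max} \sim \tfrac{\log n}{\beta}$, and therefore $d^{max} \sim \tfrac{\log n}{\beta}$. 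This is exactly the claimed dichotomy.

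The main obstacle is that $r^{max}$ need not coincide with $d^{max}$ on the nose — a single circle of radius slightly more than $d^{max}/2$ can already cover a graph of diameter $d^{max}$ (think of a path), while in other graphs one needs radius close to the full diameter. So the argument really only gives $d^{max} = \Theta(r^{max})$, which is enough for the stated $\sim$ (read as ``grows like'', i.e.\ up to constant factors and the implicit asymptotic regime), but it would be misleading to claim equality. I would also note the mild technical caveat that the relation $N(r)\sim r^{-d}$ is an empirical fit valid over the meaningful range of $r$ (from $1$ up to $r^{max}$), so the extrapolation to $N(r^{max})=1$ is heuristic rather than a theorem in the strict sense; this is consistent with how the rest of the paper treats these fits. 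Finally I would remark that, combined with Lemma~\ref{lem-NvsNb}, the same growth rate for $d^{max}$ holds in the CVIG model up to constants, since the diameters of the VIG and CVIG graphs differ only by a factor of about two.
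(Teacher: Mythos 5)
Your proposal is correct and follows essentially the same route as the paper: it uses the relation $r^{max}\leq d^{max}\leq 2\,r^{max}$ together with the boundary conditions $N(1)=n$ and $N(r^{max})=1$ to solve for $r^{max}$ in each of the two asymptotic regimes. The extra caveats you add (that the conclusion is only up to constant factors, and that the fit is heuristic) are reasonable but do not change the argument.
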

\begin{proof}
 The diameter of a graph and the maximal radius
are related as $r^{max}\leq d^{max} \leq 2\,r^{max}$.  Notice that $N(1)=n$ is the
number of nodes, and $N(r^{max}) = 1$.  Hence,
\vspace{-2mm}

\hbox to \columnwidth{\hfill
  \begin{tabular}[t]{l}
    if $N(r) =C\,r^{-d}$, then $r^{max}=n^{1/d}$, and\\
    if $N(r) = C\,e^{-\beta\,r}$, then $r^{max}=\frac{\log n}{\beta}+1$. 
  \end{tabular}
\hfill}
\end{proof}

The diameter, as well as the \emph{typical distance} $L$ between
nodes\footnote{The average of the minimal distance between two
  randomly chosen nodes.}, have been widely used in the
characterization of graphs. For instance, \emph{small world graphs}
\cite{Walsh99} are characterized as those graphs with a small typical
distance $L \sim\log n$ and a large clustering coefficient.  This
\emph{definition} of small world graphs is quite imprecise, because it
is difficult to decide what is ``small'' distance and ``large'' coefficient.
Moreover, the diameter (and the typical distance)
of a graph are measures quite expensive to compute in practice, and
very \emph{sensitive}. For instance, given a graph with
$d^{max}\approx\log n$, if we add a chain of $n'\approx c\,n$
connected nodes (a sequence of implications, in the case of a SAT
formula), representing a small fraction of the total number of nodes,
the diameter of the graph grows to $c\,n$.  However, a simple
pre-processing, like unit propagation in the case of SAT formulas, may
destroy this chain and make the diameter drop down again. The typical
distance is a more \emph{stable} measure, however it depends on the
size of the graph.  This means that, to decide if a graph has high
or low typical distance, we have to compare it with the typical
distance in a random graph of the same size.  On the contrary, a
quite good approximation of the fractal dimension can be quickly
computed, and, as it depends on the whole graph, it is quite stable under
simple graph (formula) modifications.  As we will show in this paper,
the fractal dimension of a SAT formula remains quite stable during the
solving process (which involves variable instantiation, and addition
of learnt clauses). Moreover, the dimension is independent of the
size of the graph. Therefore, we advocate for the use of the fractal
dimension instead of the diameter or the typical distance in the
characterization of graphs, search problems or SAT instances.

\subsection{Experimental Evaluation}

We have conducted an exhaustive analysis of the industrial formulas of
the SAT Race 2010 and 2012 Competitions, and some 3CNF random
formulas.

For the random formulas, in the VIG model, we observe that the
function, normalized as $N^{norm}(r) = N(r) / N(1)$ only depends on
the variable/clause ratio (and not on the number of
variables). Moreover, in the phase transition point $m/n =4.25$, the
function has the form $N^{norm}(r) = e^{-2.3\, r}$, i.e. it decays
exponentially with $\beta=2.3$ (see Figure~\ref{fig-random}). Hence,
$r^{max} = \frac{\log n}{2.3}+1$. For instance, for $n=10^6$
variables, random formulas have a radius $r^{max}\approx 7$.  For
bigger values of $m/n$ the decay $\beta$ is bigger.  For values $m/n <
4$ the formula usually forms an unconnected graph, and $N(r)$ is
bigger than the number of partitions. In this case, $N(r)$ decreases
smoothly, even though, it does not seem to have a polynomial $N(r)\sim
r^{-d}$ behavior.  In the CVIG model, we observe the same
behavior. However, in this case, $N(r)$ decays exponentially with
$\beta = 1.2\approx 2.3/2$. Hence, the decay is just half of the decay
of the VIG model, as we expected according to Lemma~\ref{lem-NvsNb}.

Analyzing industrial instances we observe that, in most cases, all
instances of the same family have a very similar normalized function
$N^{norm}(r)$. In Figure~\ref{fig-NdeL}, we show the results for the
\emph{diagnosis} family. We observe a clear heavy tail, typical in
self-similar graphs. We also observe that the functions $N^{norm}(r)$
for ACG-15-10p0 and ACG-20-10p1 (that have similar names) are closer
to each other than to the other instances of the family. The same
happens in other families, like the \emph{bitverif}. Here, three
instances are self-similar, and two not. This suggests that some
families are too heterogeneous, and contain encodings of problems of
different nature. In Figure~\ref{fig-NdeL}, we also show the
results for the \emph{velev} family. In this case, the function $N(r)$
decreases very fast (even faster than for random formulas) and
following an exponential pattern.

\begin{figure}[t]
\hbox to \columnwidth {
  \hss
  \includegraphics[width=0.6\columnwidth]{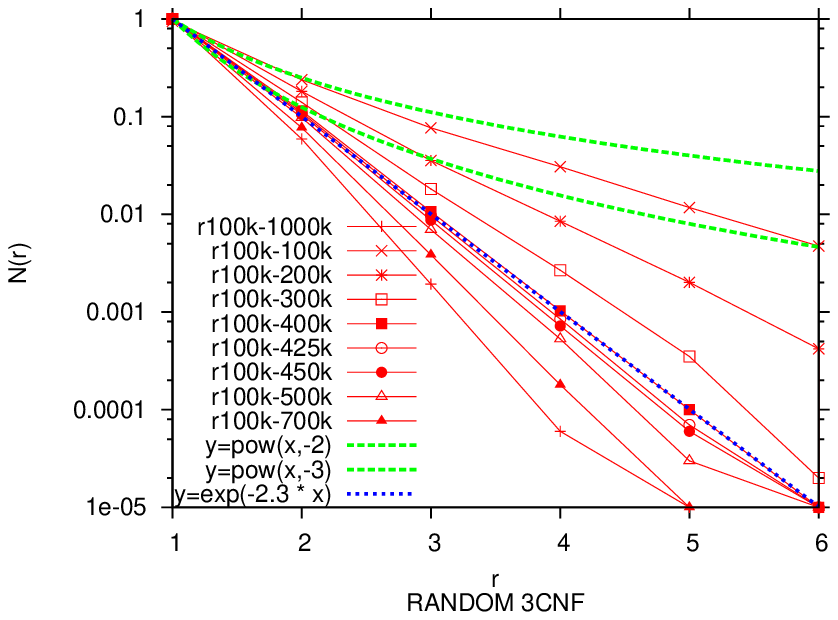}
  \hss 
  \includegraphics[width=0.6\columnwidth]{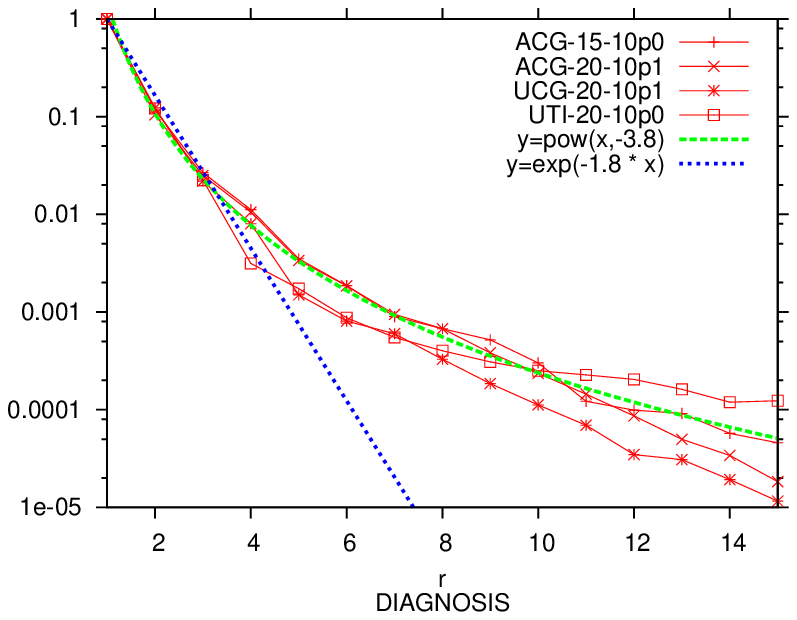}
  \hss
}
\hbox to \textwidth {
  \hss
  \includegraphics[width=0.6\columnwidth]{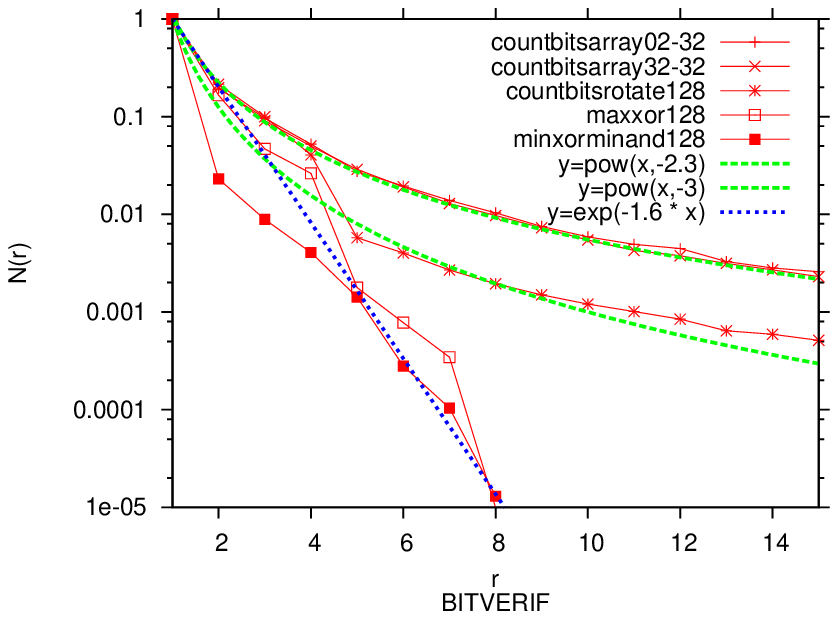}
  \hss 
  \includegraphics[width=0.6\columnwidth]{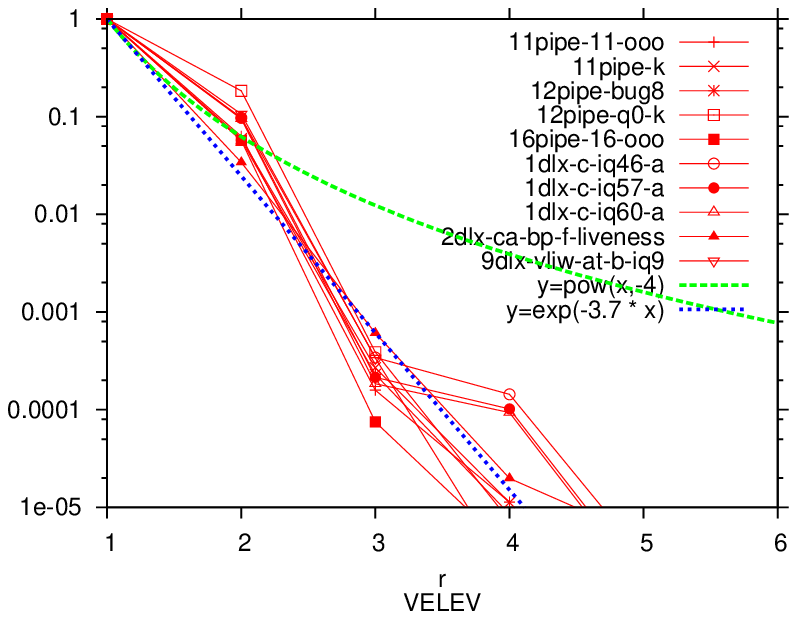}
  \hss 
}
\caption{Function $N(r)$, in the VIG model, for some 3CNF random
  formulas with distinct values of the $n/m$ fraction and some families of formulas of the
  SAT Race 2010. Axes are
  semi-logarithmic, and ``r100k-425k'' indicates $n=10^5$ variables
  and $m=4.25\cdot10^5$ clauses.}
\label{fig-NdeL}\label{fig-random}
\end{figure}

We can conclude that, in the SAT Race 2010 Competition, \emph{velev},
\emph{grieu}, \emph{bioinf} and some \emph{bitverif} instances have a
$N(r)$ function with exponential decay, i.e. are not self-similar;
whereas the rest of instances are all of them self-similar, with
dimensions ranging between $2$ and $3$. In Figure~\ref{fig-family} we
show the dimension of all instances. Since not all formulas are
self-similar, we assign them a \emph{pseudo-dimension} computed as
follows.  If $N(r)\sim r^{-d}$, then $\log N(r) \sim -d\cdot\log r$,
i.e.  the dimension is the slope of a representation of $N(r)$ vs. $r$
using logarithmic axes. Even if $N(r)$ is not polynomial, we compute
the pseudo-dimension as the interpolation, by linear regression, of
$\log N(r)$ vs. $\log r$, using the values for $r=1,\dots,5$.

In Table~\ref{tab:all_dimensions} and Figure~\ref{fig-family}, we present detailed results
of the fractal dimensions, $d$ and $d^b$, and the exponential decays,
$\beta$ and $\beta^b$, of the VIG and CVIG graphs respectively, on the
SAT Race 2010 families and some random instances. These results are
presented using the averages for each family and their standard
deviations. The values we show are computed by linear regression using
as described above.

\subsection{The Effect of Learning}

\begin{figure}[t]
\hbox to \columnwidth {
  \hss\hss
  \includegraphics[width=0.55\columnwidth]{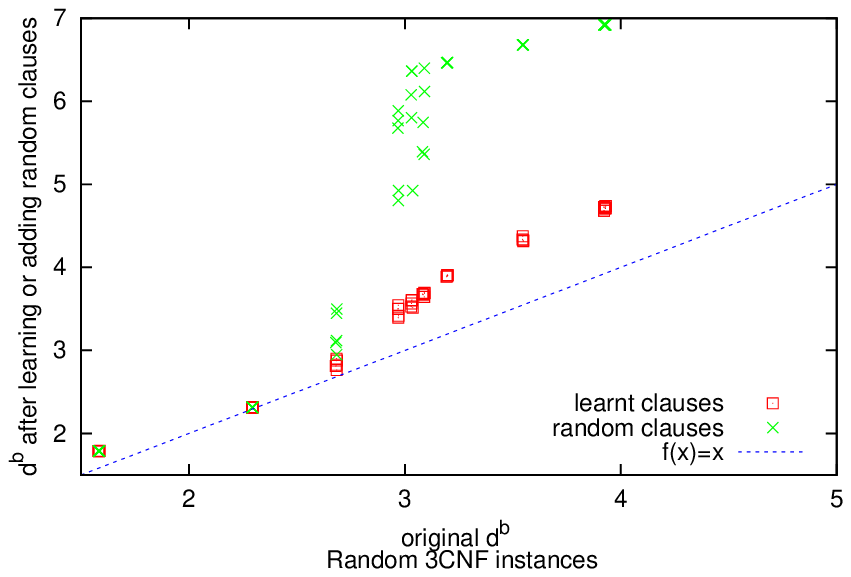}
  \hss 
  \includegraphics[width=0.55\columnwidth]{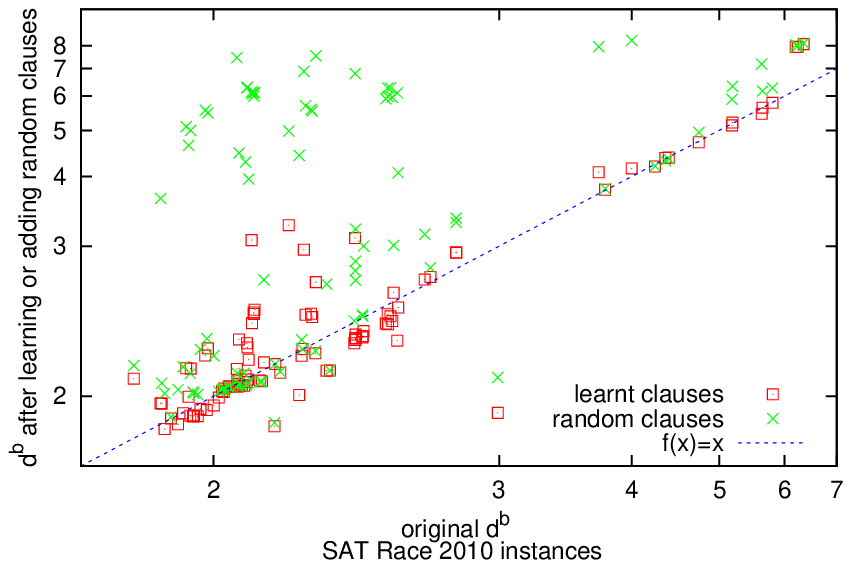}
  \hss\hss
}
\caption{Relation between the original fractal dimension $d^b$, and
  $d^b$ after adding learned clauses (red squares), or after adding random clauses (green crosses),
  at $10^5$ decisions, in random 3CNF formulas (left), and the SAT Race
  2010 instances (right).}
\label{fig-dim-after-decisions}
\end{figure}

\begin{figure}[t]
\hbox to \columnwidth {
  \hss\hss
  \includegraphics[width=0.55\columnwidth]{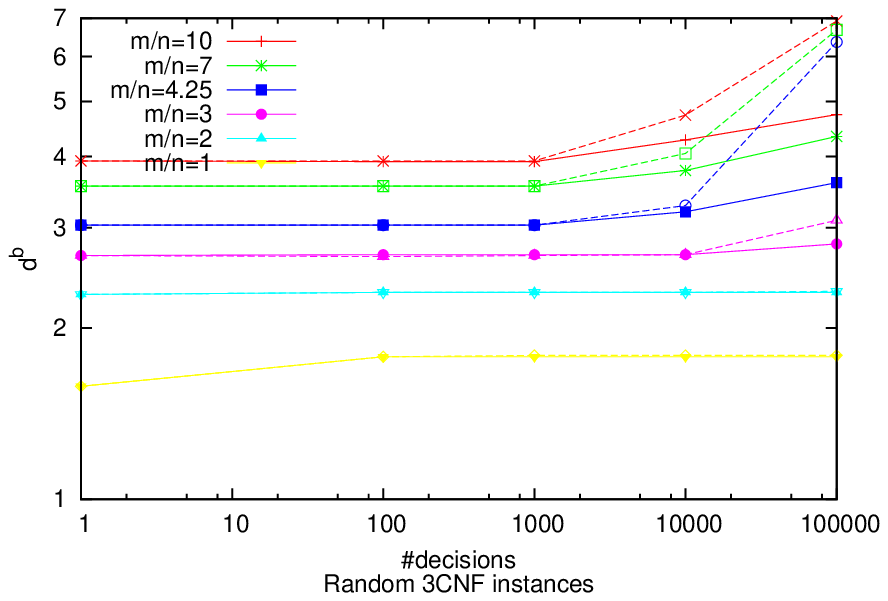}
  \hss 
  \includegraphics[width=0.55\columnwidth]{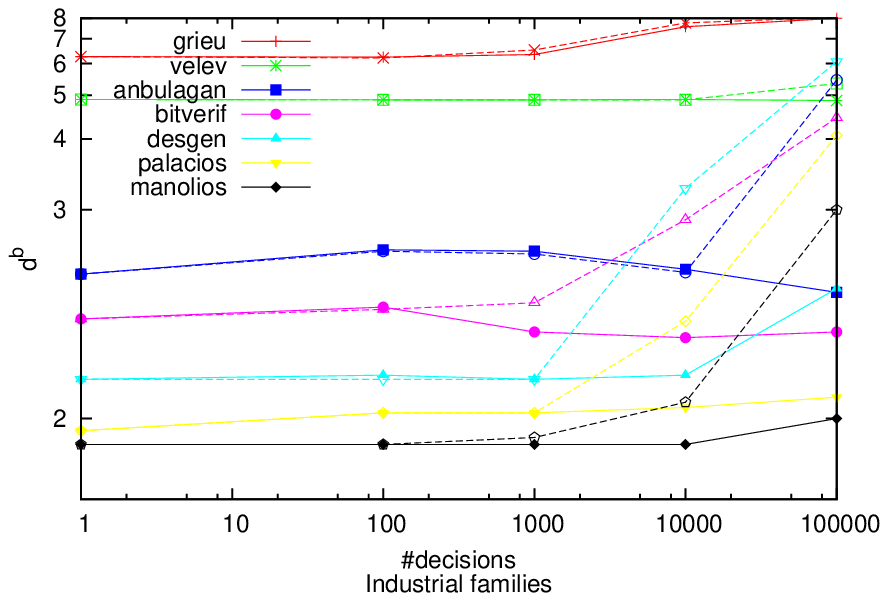}
  \hss\hss
}
\caption{Evolution of the fractal dimension $d^b$ as a function of the number
  of SAT solver decisions, using the learnt clauses (solid line) and random
  clauses (dotted line with same color), for random 3CNF formulas (left) and some industrial families
  (right).}
\label{fig-evolution}
\end{figure}

State-of-the-art SAT solvers, which incorporate Conflict Directed
Clause Learning (CDCL), extend the input formula by adding learnt
clauses from conflicts, during their execution. Unitary learnt clauses
can be propagated without deciding over any variable (i.e., at level 0
of the search tree), simplifying the original formula. Learnt clauses of
bigger length establish (explicitly) new relations between variables.

We have conducted some experiments to analyze how the fractal
dimension evolves during the execution of the SAT solver. First, we
have generated new formulas adding to the original one the learnt
clauses at different depths of the execution (in particular, after
$10^2$, $10^3$, $10^4$ and $10^5$ decisions), and propagating the
unitary clauses.\footnote{In our experimentation, we have used MiniSat
solver \cite{EenSorensson03} since it incorporates CDCL strategies
and it can be easily modified to print learnt clauses.} Then, we
have analyzed the fractal dimensions, $d$ and $d^b$, of these new
formulas. In Tables~\ref{tab:vig_decisions} and~\ref{tab:cvig_decisions} 
and Figures~\ref{fig-dim-after-decisions} and~\ref{fig-evolution}, we present the
values obtained. Columns named $d_x$ and $d^b_x$ represent the fractal
dimensions after $x$ decisions, for the VIG and CVIG respectively.

Two different phenomena can be observed. On one hand, and only in some
cases, after a small number of decisions, the fractal dimension
slightly decreases (see $d_{10^2}$ and $d^b_{10^2}$ in families
\emph{mizh}, \emph{ibm}, \emph{bioinf} and \emph{nec}). This is due to
the learning and propagation of unitary clauses, that simplify the
original formula. Notice that this fact does not happen in random 3CNF
formulas, for which no unitary clauses are learnt. On the other hand,
fractal dimension increases as the execution progresses. This fact is
expected because learnt clauses mean conflicts between subsets of
variables. So, a learnt clause establishes new connections between
variables, directly (in the VIG) or indirectly through nodes
\emph{clause} (in the CVIG). Therefore, the number of tiles needed to
cover the whole graph decreases with the addition of new learnt
clauses, and hence, the fractal dimension becomes higher.  In other
words, new clauses make typical distance decrease, hence fractal
dimension increase.  Empirical results prove this hypothesis in all
the formulas, including random 3CNF.

In a second experiment, we try to quantify this dimension increase.
To do that, we have used the same formulas as before,
but replacing the set of learnt clauses by the same number of random clauses
of the same size. 
In Tables~\ref{tab:vig_decisions} and~\ref{tab:cvig_decisions}
and Figures~\ref{fig-dim-after-decisions} and~\ref{fig-evolution}, we present our
results. Columns named $d_{x-r}$ and $d^b_{x-r}$ represent the fractal
dimensions after $x$ decisions, replacing learnt clauses by random
clauses. In the first steps of execution, the fractal dimension
obtained using random clauses is very close to the values obtained
using the learnt clauses. However, in further steps of the execution,
random clauses produce significantly higher dimension increase than
learnt clauses (except in \emph{velev} and \emph{grieu} families,
where it is very similar).

This 
can be explained as follows: initially, the solver pre-processes the
formula, finding \emph{fast} conflicts and generating \emph{short}
clauses. This is the case of learning and propagating unitary clauses
in some instances. Then, it starts its execution choosing variables
randomly because the activity-based heuristic does not have enough
information to work correctly. This causes the generation of clauses
that connect variables \emph{randomly}, and have the same effect on
the dimension as random clauses.
Once the heuristic starts to work, the solver focuses on subsets of
\emph{local}\footnote{Variables that are very close in the graph.}
variables. While the values of $d_x$ and $d_{x-r}$ are still very
close in random, \emph{velev}, and \emph{grieu} instances (i.e. in the
instances with higher dimension), $d_{x-r}$ is significantly higher
than $d_x$ in the rest of industrial instances (see $d_{10^4}$ for
instance).

We can conclude that CDCL solvers tend to work \emph{locally}, because
conflicts found by the solver concern variables that were already
close in the graph. So these conflicts are useful to explicitly show
some local restrictions, but they hardly ever connect distant parts of
the formula. This strategy seems the most adequate when dealing with
formulas with small dimension (big typical distance between
variables), like most industrial SAT instances.

\section{Additional Graph Properties}\label{sec:other}

In this section we are going to review two other features of CNF
formulas.  These characteristics are also related to the corresponding
graph features, and they are usually studied in the context of
distinguishing random graphs from real networks.


The first feature is the distribution of arities of the nodes in a
graph.  In the classical random graph model~\cite{erdos-renyi}, the
probability that an edge is chosen is constant. Therefore, the node
arities follow a binomial distribution, and most node have about the
same number of edges.  In scale-free graphs node arities follow a
power law distribution $p(k)\sim k^{-\alpha}$, where usually
$2<\alpha<3$. These distributions are characterized by a great
variability.  In recent years it has been observed that many other
real-world graphs, like some social and metabolic networks, also have
a scale-free structure (see~\cite{barabasi99}).

Similarly, in the context of CNF formulas, instances where variables
are selected with a uniform distribution, are called random
formulas. In them, the number of occurrences of a variable also
follows a binomial distribution, and most variables occur about the
same number of times.  In~\cite{CP09}, the distribution of occurrences
of variables in industrial formulas (from the SAT competitions) was
analyzed.  For every instance, they compute the values $f^{real}(k)$,
where $f^{real}(k)$ is the number of variables that have a number of
occurrences equal to $k$.

They see that in many industrial formulas $f^{real}(k)$ is close to a
power-law distribution $k^-\alpha$, where the exponent $\alpha$ range
between $2$ and $3$. 

This value $\alpha$ can be approximated with the
\emph{most-likely method}. As the power-law distribution is intended
to describe only the tail of the distribution, we can discard some
values of $f^{real}(k)$, for small values of $k$. Experimentally, we
observe that in most industrial formulas arities of variables follow a
power-law distribution with $\alpha$ ranging from $2$ to $3$ (see
Figure~\ref{fig-family}).  For the rest of formulas, we compute a
\emph{pseudo-exponent} using the same approximate method, and allowing
to discard up to $5$ values of $f^{real}(k)$, and minimizing the
error, measured as the maximal difference between the real and the
approximated distributions. The computation of the $\alpha$ exponent is
extremely fast.


The second feature to analyze the structure of a SAT instance is the
notion of \emph{modularity} introduced by \cite{newmangirvan04} for
detecting the \emph{community structure} of a graph.  This property is
defined for a graph and a specific \emph{partition} of its vertices
into \emph{communities}, and measures the adequacy of the partition in
the sense that most of the edges are within a community and few of
them connect vertices of distinct communities. The modularity of a
graph is then the maximal modularity for all possible partitions of
its vertices.  Obviously, measured this way, the maximal modularity
would be obtained putting all vertices in the same community. To avoid
this problem, \cite{newmangirvan04} defines modularity as the fraction
of edges connecting vertices of the same community minus the expected
fraction of edges for a random graph with the same number of vertices
and edges.

The problem of maximizing the modularity of a graph is
NP-hard~\cite{brandes06}. As a consequence, most of the
modularity-based algorithms proposed in the literature return an
approximate lower-bound value for the modularity (see a survey
in~\cite{fortunato10}). However, the complexity of many of these
algorithms, make them inadequate for large graphs (as it is the case
of industrial SAT instances, viewed as graphs). For this reason, there
are algorithms specially designed to deal with large-scale networks,
like the greedy algorithms for modularity
optimization~\cite{newman04,clausetetal04}, the label
propagation-based algorithm~\cite{Raghavan} and the method based on
graph folding~\cite{fastmodularity}. 

The community structure of SAT formulas was introduced in~\cite{SAT12}
using the weighted VIG model. 
Here, we
reproduce the analysis for the SAT Race 2010 competition (see
Figure~\ref{fig-family}). We use the folding
algorithm~\cite{fastmodularity}, that relaxing the precision on the
computed approximation, may run in some seconds in most formulas, and
less than $1$ minute in all them.

We could conclude that the \emph{typical} industrial SAT instance is a
formula with a fractal dimension ranging from $2$ to $3$, where
frequencies of variable occurrences follow a power-law distribution
with an exponent also ranging from $2$ to $3$, and a clear community
structure with $Q\approx 0.8$. We think that most SAT solvers are
optimized for dealing with this kind of formulas.

\section{Portfolio SAT Approaches}\label{sec:port}

From the SAT competitions that take place every year since 2002, we
have learnt that no solver dominates over all the instances. From a
theoretical point of view, this makes sense, since the underlying
proof system of SAT solvers is resolution, and it has been shown not
to be automatizable~\cite{AlekhnovichR08} (under strong
assumptions). A proof system is automatizable if there exists an
algorithm that given an unsatisfiable formula, produces a refutation
in time polynomial in the size of shortest
refutation~\cite{BonetPR00}.  Therefore, it seems reasonable to have a
pool of SAT solvers, and given a SAT instance try to predict their
expected running time in order to choose the best candidate.  This is
known as the algorithm selection problem, which consists of choosing
the best algorithm from a predefined set, to run on a problem
instance~\cite{Rice76}.  Algorithm portfolios tackle this problem.

Portfolios have been shown to be very successful in
Satisfiability~\cite{Xu:2008,Kadioglu:2011}, Constraint
Programming~\cite{hydra}, Quantified Boolean
Formulas~\cite{Pulina:2007}, etc. Modern portfolio solvers are an
example of how machine learning can help Constraint
Programming. Machine learning techniques are used to build the
prediction model of the expected running times of a solver on a given
instance.

The first successful algorithm portfolio for SAT was exploited by
SATzilla 2007~\cite{Xu:2008}.  In this algorithm a regression function
is trained to predict the performance of every solver based on the
features of an instance. For a new instance, the solver with the best
predicted runtime is chosen.

The success of modern SAT/CSP solvers is correlated with their ability
to exploit the hidden structure of real-world
instances~\cite{WilliamsGS03}. Therefore, a key element of SAT/CSP
portfolios is to carefully select which features identify the
underlying structure of the instance.  These features correspond to
the attributes the learning algorithm will use to build the classifier
or predictor. The features must be related to the hardness of solving the
instance, since our goal is to predict which solver will be the most
efficient for the given instance. Also the computation has to be
automatizable and with a reasonable cost, since it would not make
sense to consume more time on computing the features than solving the
instance.  For example, in the SATzilla version for the SAT
competition, the timeout for computing the features is around 90
seconds, while the timeout for solving an instances is around 900
seconds in the SAT challenge 2012.

With respect to the features to be analyzed, SATzilla2012 identifies a
total of $138$.  The first $90$ features, introduced for the original
SATzilla can be categorized as follows: problem size features (1-7),
graph based features (8-36), balance features (37-49), proximity to
Horn Formula features (50-55), DPLL probing features (56-62), LP-Based
features (63-68) and local search problem features (69-90). The
features in the last three categories can be expensive to compute in
large instances, and therefore, in practice, we can not use them.  The
rest of the categories, correspond to: clause learning features
(91-108), survey propagation (109-126) and timing (127-138).

As we just mentioned, SATzilla uses graph based features. These
features are extracted from the CVIG, VIG and CIG representations of a
SAT instance as a graph (see Section~\ref{sec:dimensionSAT} for
definitions of these representations). On these graphs, \emph{node
  degree statistics} are computed.  In the case of CVIG, variable and
clause nodes are analyzed independently. Additionally, \emph{diameter
  statistics} and \emph{clustering coefficient statistics} are
computed for the VIG and CIG graphs, respectively. The statistics
involve the computation of the mean, variation coefficient, min, max
and entropy.

\section{Feature-Based SAT Instance Classification}\label{sec:exp}

In order to analyze how good a set of features is for characterizing
SAT instances, we conduct an experimental investigation using
supervised machine learning techniques.  These techniques allow us to
build an instance classifier $h$, that given an instance $\hat{x} =
(x_1, x_2, \ldots, x_m)$ characterized by $m$ computable attributes
(in our case the features of a SAT instance), and a finite set of
class labels $\mathcal{L} = \{\lambda_1, \lambda_2, \ldots,
\lambda_k\}$, decides its label $\hat{\lambda} \in \mathcal{L}$. That
is $h(\hat{x}) = \hat{\lambda}$.  In order to validate the classifier
we use cross-validation. One round of cross-validation involves
partitioning the set of instances into two complementary subsets, the
training set and the test set. The classifier is built with the
training set, while the validation is performed with the test set. In
our experiments, rounds have one instance as test set, and the
rest as training set. We have as many rounds as instances.

Our set of instances comes from the industrial track of the SAT
competitions. Within this track, instances are grouped into families,
according to their industrial application area (e.g. hardware
verification, cryptography, planning, scheduling, etc.). 

We have used the $100$ instances of the SAT Race 2010, that are
grouped into $17$ families. We also tested the $600$ instances of the
SAT Challenge 2012 (application track), that are grouped into $20$
families.  In our experiments, we had to face two problems.  On one
hand, some families are too wide in the sense that the family is not
specific enough. For example, in the \emph{termination} family from
2012, different termination problems are considered, and different
encodings of the same termination problem appear. Notice that having a
different encoding of a problem is enough to alter substantially the
performance of a SAT solver. On the other hand, many formulas are so
hard, that SATzilla features tool crashes computing the features of some of
them. Thus, although our graph properties are computable, it
would make the comparison unfair. Therefore, we decided to focus our
experimentation on the instances from the 2010 competition because the
mentioned problems were fewer. Even in the SAT 2010 set, the problem
to compute the SATzilla features arises, and we had to eliminate the
two instances of the \emph{post} family.

\begin{figure*}[t]
\includegraphics[height=70mm]{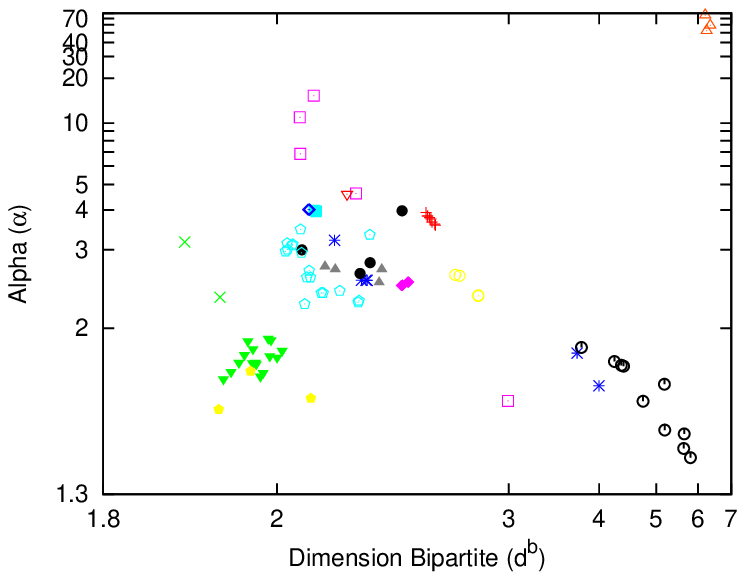}
 
\includegraphics[height=70mm]{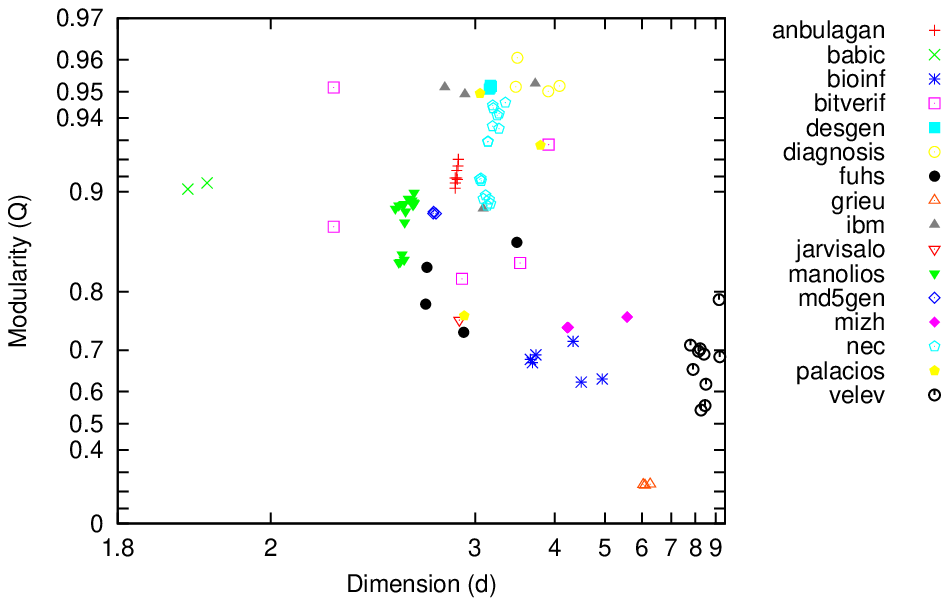}
\caption{Distribution of families according to their $\alpha$ exponent,
  modularity and fractal dimensions.}
\label{fig-family}
\end{figure*}

First we will present the problem of instance classification into
families.  In Figure~\ref{fig-family}, we show the four coordinates of
the graph features: exponent $\alpha$ , modularity $Q$ and fractal
dimensions $d$ and $d^b$, for both VIG and CVIG, respectively.
Instances of each family are plotted with a distinct mark. At first
sight, we can see that instances belonging to the same family are
usually closer to each other, except for the instances of the
\emph{bitverif} family. Thus, we could conclude that most of the
instances are well classified into families by these graph
features. 

In the first experiment we have conducted, we try to validate the
previous hypothesis, doing a cross-validation test on the classifiers
of instances into families. For this purpose, we use the supervised
learning C4.5 algorithm~\cite{Quinlan}. This is a classifying method
based on decision trees. In Table~\ref{tab:all_families}, we present these
results.

\begin{table}[t]
\begin{center}
\begin{tabular}{|c|c|c|c||c|}
\hline
\multicolumn{4}{|c||}{Set of features}		& \# success on 98 inst.\\ \hline
\hline 
\multicolumn{4}{|c||}{ $d^b$, $\alpha$, $m/n$}			& 80 \\ \hline
\multicolumn{4}{|c||}{ $d^b$, $\alpha$, $d$, $Q$, $m/n$}	& 79 \\ \hline
\multicolumn{4}{|c||}{ $d^b$, $\alpha$, $d$, $m/n$}		& 79 \\ \hline
\multicolumn{4}{|c||}{ $d^b$, $\alpha$, $Q$, $m/n$}		& 79 \\ \hline
\multicolumn{4}{|c||}{ $d^b$, $\alpha$, $d$, $Q$}		& 78 \\ \hline
\multicolumn{4}{|c||}{ $\alpha$,  $d$, $Q$}			& 76 \\ \hline
\multicolumn{4}{|c||}{ $\alpha$, $Q$, $m/n$}			& 76 \\ \hline
\multicolumn{4}{|c||}{SATzilla 2012 ($138$ features)}		& 75 \\ \hline
\hline
\end{tabular}
\end{center}
\caption{Impact of the feature set on the success of predicting the
  family a SAT instance belongs to. Features are ordered by their impact in the classifier.}
\label{tab:all_families}
\end{table}

We built two classifiers: one with the $138$ SATzilla features, and
another with the $\alpha$, $Q$, $d$ and $d^b$ features plus the
clause-variable ratio $m/n$. We included $m/n$ into our set of
features since this can be a natural indicator of the hardness of the
instance. As we can see in Table~\ref{tab:all_families}, we obtain
comparable results with respect to the SATzilla-based classifier,
using only $3$ or $4$ features. We tested all the possible subsets,
but we only present those with a success greater or equal to what we
achieve with SATzilla 2012 using the 138 features.  It is important to
notice that the fractal dimension $d^b$ on CVIG appears in the highest
ranked subsets, and seems to be better than using the $d$ on VIG.

Next, in our second experiment we want to check if our features set
could be used as the basis of a portfolio SAT solver.  Thus, we will
use them to predict which is the best SAT solver for a SAT instance.
One of the techniques used in supervised learning is the $k$-NN
($k$-nearest-neighbor) method. It consists on selecting for a test
instance, the classification of the $k$ nearest training instances.
This is the approach used, for instance, in~\cite{MalitskySSS11}.  In
our case, we modify this method as follows. Let $t_i^s$ be the time
needed by solver $s$ on SAT instance $i$, and let $d_{ij}$ be the
distance between test instance $i$ and training instance $j$
(computed using the euclidean distance, according to their
(normalized) feature values).  We can predict the time needed by
solver $s$ on an instance $i$ as
$$
\hat{t}_i^s = \frac{\sum_{j\neq i} t_j^s/d_{i,j}^2}{\sum_{j\neq i} 1/d_{i,j}^2}
$$ 
and choose the solver with a minimal prediction. 

We apply this method to the $5$ features, $\alpha$, $Q$, $d$, $d^b$,
$m/n$, and to the $138$ SATzilla features. These two
\emph{hypothetical portfolios} would use the single SAT solvers of the
SAT Race 2010, and their running times as $t_i^s$\footnote{At \url{http://baldur.iti.uka.de/sat-race-2010/results.html}, the result of SAT Race 2010 are published. Follow this \href{http://baldur.iti.uka.de/sat-race-2010/results/SAT-Race\%202010\%20Results\%20Main\%20Track.html}{link} to find detailed results, including runtimes for all solvers and instances used in this competition.}.  We also simulate
how they would be classified in the same SAT Race 2010. The one based
on SATzilla features would solve $72$ instances (just like the SAT
Race 2010 winner, removing the two \emph{post} instances). The one
based on the four features would solve $71$ instances.

\begin{center}
\tabcolsep 1mm
\begin{tabular}{|l||r|r|r|r|}
\hline
& \small SATzilla  & \small CryptoMiniSat & \small $\alpha$, $Q$, $d$, $d^b$, & \small lingeling \\
& \small features &   & \multicolumn{1}{l|}{$m/n$} & \\
\hline
\#solved & 72 & 72 & 71 & 71 \\
Avg. time & 98.8 & 138.3 & 95.2 & 111.4 \\
\hline
\end{tabular}
\end{center}

These results are still a bit far from the results of the virtual best
solver, that would solve 78 instances. However, if we analyze the
results in each incorrectly classified instance, we can see that there
is not too much room for improvement. For example, one of the
\emph{diagnosis} instances (UTI-20-10p0) is only solved by
CryptoMiniSat. However, this solver does not solve any other of the
instances of the \emph{diagnosis} family. On the contrary, the
lingeling solver (the one both meta-solvers choose) is the best
solving the rest of instances of this family, but does not solve
UTI-20-10p0.  Therefore, any reasonable learning method would fail
selecting a solver for this instance, as both meta-solvers do.

Our final experiment will be to use our features in a state-of-the-art
portfolio. We reached out to the IBM team (winner of some tracks in
2011 \& 2012 SAT-competitions). Their portfolio is based on
hierarchical clustering, conceptually close to decision forests. They
kindly used their portfolio with our 5 features and SATzilla's
138. Not incorporating feature computation time, our feature set
solves 87.2\% instances, and the 138 feature set solves only
82.7\%. Taking into account feature computation time, our features
solve 75.8\% instances, while the 138 feature set solves only 42.85\%.

We cannot explain yet why these features are so much more powerful for
solver selection. However, any classifier is easier to dissect when
based on 5 features rather than 138.

\section{Conclusions}
\label{sec:conclusion}

In this paper we have studied the existence of self-similarity in
industrial SAT instances. We can conclude that, in the SAT Race 2010
Competition, the \emph{velev}, \emph{grieu}, \emph{bioinf} and some
\emph{bitverif} instances are not self-similar; whereas the rest of
instances are all of them self-similar, with fractal dimensions
ranging between $2$ and $3$. These fractal dimensions are very small
when compared with random SAT formulas. Fractal dimension and typical
distances and graph diameter are related (small dimension implies big
distance and diameter).  Hence, industrial SAT instances have a big
diameter (intuitively, we need quite long chains of implications to
propagate a variable instantiation to others).

We have studied the evolution of fractal dimension of SAT formulas
along the execution of a solver.  We can say that, in general, fractal
dimension increases when new learnt clauses are added to the formula,
except in the first steps of solving some industrial instances, where
some unitary clauses are learnt. Moreover, this increase is specially
abrupt in those instances that show exponential decays (for instance,
in the family \emph{grieu} or random formulas).  This increase is
small, if we compare it with the effect of adding random clauses.
Therefore, learning \emph{does not} contribute very much to connect
distant parts of the formula, as one could think.

We have explored how these graph features plus the clause-variable
ratio could be used within portfolios to characterize SAT
instances. First, we observed that these five features can be used to
classify SAT instances into families comparing favorably to the
results obtained with the $138$ features from SATzilla2012. Second, we
simulated how two hypothetical portfolios would have performed in the
SAT Race 2010 Competition using the four features, and the $138$
features from SATzilla2012, respectively. We observed that they
perform similarly. Third, we provided data from a real portfolio that
shows the effectiveness of this approach.

As future work, we plan to investigate into more detail how to use
structural graph features such as the fractal dimension, the $\alpha$
exponent or the modularity, to design more efficient single SAT
solvers.

\bibliography{corr13}

\newpage

\section*{Appendix}

In this appendix we present all the numerical values that have been
used in the plotting of the figures.

\begin{table}[h]
\begin{center}
\begin{tabular}{|c|l||r|r||r|r|}\hline

\multicolumn{2}{|c||}{Family}& 
\multicolumn{2}{c||}{Variable IG}& 
\multicolumn{2}{c|}{Clause-Variable IG} \\

\multicolumn{2}{|c||}{(\#instanc.)} & 
$d$ & $\beta$ &
$d^b$ & $\beta^b$ \\\hline\hline
\multirow{3}{*}{\begin{sideways}cripto.\end{sideways}}
 & desgen (4)&   3.15 $\pm$ 0.00  & 1.28 $\pm$ 0.00 &  
                         2.08 $\pm$ 0.00  & 0.82 $\pm$ 0.00  \\
 & md5gen (3) & 2.67 $\pm$ 0.00 & 1.08 $\pm$ 0.00 &
               2.07 $\pm$ 0.00 & 0.81 $\pm$ 0.00 \\
 & mizh (8) & 4.73 $\pm$ 0.42 & 2.05 $\pm$ 0.11 &
               2.37 $\pm$ 0.00 & 0.95 $\pm$ 0.00 \\\hline
\multirow{3}{*}{\begin{sideways}hard. ver.\end{sideways}}
 & ibm (4) &  3.11 $\pm$ 0.13 & 1.24 $\pm$ 0.03 &  
               2.20 $\pm$ 0.00 & 0.86 $\pm$ 0.00  \\
 & manolios (16) & 2.49 $\pm$ 0.00 & 0.98 $\pm$ 0.00 &
               1.96 $\pm$ 0.00 & 0.77 $\pm$ 0.00 \\
 & velev (10) & 8.41 $\pm$ 0.20 & 3.82 $\pm$ 0.23 &
               4.89 $\pm$ 0.44 & 1.93 $\pm$ 0.07 \\\hline
\multirow{6}{*}{\begin{sideways}mixed\end{sideways}}
 & anbulagan (8) & 2.84 $\pm$ 0.00 & 1.13 $\pm$ 0.00 &
               2.49 $\pm$ 0.00 & 0.98 $\pm$ 0.00 \\
 & bioinf (6) & 4.12 $\pm$ 0.24 & 1.83 $\pm$ 0.16 &
              2.75 $\pm$ 0.61 & 1.04 $\pm$ 0.07 \\
 & diagnosis (4) & 3.72 $\pm$ 0.08 & 1.49 $\pm$ 0.02 &
               2.71 $\pm$ 0.00 & 1.07 $\pm$ 0.00 \\
 & grieu (3) & 6.13 $\pm$ 0.01 & 3.35 $\pm$ 0.00 &
               6.26 $\pm$ 0.00 & 2.42 $\pm$ 0.00 \\
 & jarvisalo (1) & 2.86 & 1.19 &
               2.17 & 0.85 \\
 & palacios (3) & 3.24 $\pm$ 0.15 & 1.25 $\pm$ 0.02 &
               1.98 $\pm$ 0.01 & 0.77 $\pm$ 0.00 \\\hline
\multirow{4}{*}{\begin{sideways}soft. ver.\end{sideways}}
 & babic (2) &  1.71 $\pm$ 0.03 & 0.67 $\pm$ 0.01 &
               1.89 $\pm$ 0.00 & 0.73 $\pm$ 0.00 \\
 & bitverif (5) & 2.92 $\pm$ 0.48 & 1.17 $\pm$ 0.08 &
              2.27 $\pm$ 0.13 & 0.90 $\pm$ 0.02 \\
 & fuhs (4) & 2.90 $\pm$ 0.12 & 1.18 $\pm$ 0.03 &
               2.22 $\pm$ 0.01 & 0.87 $\pm$ 0.00 \\
 & nec (17) & 3.15 $\pm$ 0.01 & 1.28 $\pm$ 0.00 &
               2.09 $\pm$ 0.01 & 0.82 $\pm$ 0.00 \\
\hline\hline
\multicolumn{2}{|l||}{$rand_{m/n=1.00}$ (5)} &
1.77 $\pm$ 0.00 & 0.69 $\pm$ 0.00 & 1.58 $\pm$ 0.00 & 0.64 $\pm$ 0.00\\
\multicolumn{2}{|l||}{$rand_{m/n=2.00}$ (5)} &
3.40 $\pm$ 0.00 & 1.37 $\pm$ 0.00 & 2.29 $\pm$ 0.00 & 0.93 $\pm$ 0.00\\
\multicolumn{2}{|l||}{$rand_{m/n=3.00}$ (5)} &
4.67 $\pm$ 0.00 & 1.92 $\pm$ 0.00 & 2.68 $\pm$ 0.00 & 1.09 $\pm$ 0.00 \\
\multicolumn{2}{|l||}{$rand_{m/n=4.00}$ (5)} &
5.54 $\pm$ 0.00 & 2.29 $\pm$ 0.00 & 2.97 $\pm$ 0.00 & 1.21 $\pm$ 0.00 \\
\multicolumn{2}{|l||}{$rand_{m/n=4.25}$ (5)} &
5.76 $\pm$ 0.00 & 2.39 $\pm$ 0.00 & 3.03 $\pm$ 0.00 & 1.23 $\pm$ 0.00 \\
\multicolumn{2}{|l||}{$rand_{m/n=4.50}$ (5)} &
5.97 $\pm$ 0.02 & 2.48 $\pm$ 0.00 & 3.09 $\pm$ 0.00 & 1.26 $\pm$ 0.00 \\
\multicolumn{2}{|l||}{$rand_{m/n=5.00}$ (5)} &
6.49 $\pm$ 0.05 & 2.72 $\pm$ 0.01 & 3.20 $\pm$ 0.00 & 1.30 $\pm$ 0.00 \\
\multicolumn{2}{|l||}{$rand_{m/n=7.00}$ (5)} &
7.01 $\pm$ 0.00 & 2.91 $\pm$ 0.00 & 3.55 $\pm$ 0.00 & 1.44 $\pm$ 0.00 \\
\multicolumn{2}{|l||}{$rand_{m/n=10.00}$(5)} &
7.35 $\pm$ 0.00 & 3.01 $\pm$ 0.00 & 3.93 $\pm$ 0.00 & 1.58 $\pm$ 0.00 \\\hline
\end{tabular}
\end{center}
\caption{Fractal dimensions, $d$ and $d^b$, and exponential decays, $\beta$ and
$\beta^b$, of the VIG and CVIG respectively, of the families of the
SAT Race 2010 and some random formulas (of $n=10^5$ variables and
$m$ clauses). Values are presented by averages for families
and their standard deviations.}
\label{tab:all_dimensions}
\end{table}

\begin{table}[t]
\begin{center}
\begin{tabular}{|c|l||r||r|r|r|r||r|r|r|r|}\hline

\multicolumn{2}{|c||}{Family}& 
\multicolumn{9}{c|}{Variable IG} \\

\multicolumn{2}{|c||}{(\#instanc.)} & 
$d_{orig}$ & $d_{10^2}$ & $d_{10^3}$ & $d_{10^4}$ & $d_{10^5}$ &
$d_{10^2-r}$ & $d_{10^3-r}$ & $d_{10^4-r}$ &
$d_{10^5-r}$ \\ \hline\hline
\multirow{3}{*}{\begin{sideways}cripto.\end{sideways}}
 & desgen(4)&  3.15& 3.16& 3.19{\tiny (4)}& 3.33{\tiny (4)}& 3.53{\tiny (4)}&
                         3.13& 3.16{\tiny (4)} &  6.88{\tiny (4)} & 9.31{\tiny (4)} \\
 & md5gen(3) & 2.67& 2.71& 2.71{\tiny (3)}& 2.78{\tiny (3)}& 2.87{\tiny (2)} &
                          2.72 & 2.71{\tiny (3)} & 7.34{\tiny (3)} & 10.27{\tiny (2)}\\
 & mizh(8) & 4.73& 2.96 & 2.95{\tiny (8)} & 2.95{\tiny (8)} & 2.94{\tiny (8)} &
                     2.95 & 2.94{\tiny (8)} & 2.91{\tiny (8)} & 4.77{\tiny (8)}\\\hline
\multirow{3}{*}{\begin{sideways}hard. ver.\end{sideways}}
 & ibm(4) &  3.11 & 3.03 &  3.11{\tiny (3)} & 3.03{\tiny (3)} & 2.70{\tiny (2)}& 
                    3.01 & 3.05{\tiny (3)} &  3.00{\tiny (3)} &  4.65{\tiny (2)}  \\
 & manolios(16) & 2.49& 2.50 & 2.49{\tiny (16)} & 2.54{\tiny (16)} & 2.85{\tiny (14)} & 
                             2.50 & 2.51{\tiny (16)} & 2.95{\tiny (16)} & 4.91{\tiny (14)} \\
 & velev(10) & 8.41& 8.42 & 8.42{\tiny (10)} & 8.43{\tiny (10)} & 8.78{\tiny (10)} & 
                       8.39 & 8.39{\tiny (10)} & 8.39{\tiny (10)} & 8.75{\tiny (10)}\\\hline
\multirow{6}{*}{\begin{sideways}mixed\end{sideways}}
 & anbulagan(8) & 2.84& 2.79 & 2.79{\tiny (8)} & 2.77{\tiny (8)} & 2.91{\tiny (8)} &
                             2.79 & 2.83{\tiny (8)} & 3.50{\tiny (8)} & 9.04{\tiny (8)}\\
 & bioinf(6) &  4.12& 4.05 & 3.76{\tiny (6)} & 4.11{\tiny (5)} & 4.25{\tiny (5)} &
                       3.87 & 4.38{\tiny (6)} & 6.16{\tiny (5)} & 9.81{\tiny (5)}\\
 & diagnosis(4) & 3.72& 3.62 & 3.62{\tiny (4)} & 3.53{\tiny (4)} & 3.46{\tiny (4)}&
                            3.59 & 3.60{\tiny (4)} & 3.77{\tiny (4)} & 7.51{\tiny (4)} \\
 & grieu(3) & 6.13& 6.13 & 6.16{\tiny (3)} & 8.53{\tiny (3)} & 9.68{\tiny (3)}&
                     6.12 & 6.21{\tiny (3)} & 8.55{\tiny (3)} & 9.68{\tiny (3)}\\
 & jarvisalo(1) & 2.86& 3.04& 4.72{\tiny (1)}& 6.39{\tiny (1)}& 5.73{\tiny (1)}&
                          4.34& 6.77{\tiny (1)} & 7.01{\tiny (1)} & 6.97{\tiny (1)}\\
 & palacios(3) & 3.24& 6.63 & 6.62{\tiny (3)} & 6.39{\tiny (3)} & 6.50{\tiny (3)} &
                          6.63 & 6.62{\tiny (3)} & 7.09{\tiny (3)} & 7.90{\tiny (3)} \\\hline
\multirow{5}{*}{\begin{sideways}soft. ver.\end{sideways}}
 & babic(2) &  1.71& 2.20 & 2.20{\tiny (2)} & 2.23{\tiny (2)} & 2.26{\tiny (2)} & 
                       2.17 & 2.19{\tiny (2)} & 2.24{\tiny (2)} & 2.56{\tiny (2)}\\
 & bitverif(5) & 2.92& 2.96 & 3.08{\tiny (5)} & 3.64{\tiny (5)} & 4.28{\tiny (5)} &
                        2.96 & 3.52{\tiny (5)} & 5.04{\tiny (5)} & 6.81{\tiny (5)}\\
 & fuhs(4) & 2.90& 2.89 & 3.09{\tiny (4)} & 3.48{\tiny (4)} & 3.73{\tiny (4)} &
                    2.88 & 3.91{\tiny (4)} & 6.32{\tiny (4)} & 8.46{\tiny (4)}\\
 & nec(17) & 3.15 & 2.93 & 2.91{\tiny (14)} & 2.89{\tiny (2)} & - {\tiny (0)} &
                    2.94 & 2.89{\tiny (14)} & 3.19{\tiny (2)} & - {\tiny (0)}\\\hline
\hline
\multicolumn{2}{|l||}{$rand_{\alpha=1.00}$ (5)} & 1.77 & 2.67 & 2.67{\tiny (5 )}& 2.67{\tiny (5)} & 2.67{\tiny (5)} &
	2.66 & 2.67{\tiny (5)} & 2.66{\tiny (5)} & 2.68{\tiny (5)} \\
\multicolumn{2}{|l||}{$rand_{\alpha=2.00}$ (5)} & 3.40 & 3.79& 3.79{\tiny (5)} & 3.79{\tiny (5)} & 3.81{\tiny (5)} & 
	3.79 & 3.80{\tiny (5)} & 3.79{\tiny (5)} & 3.79{\tiny (5)} \\
\multicolumn{2}{|l||}{$rand_{\alpha=3.00}$ (5)} & 4.67 & 4.78 & 4.78{\tiny (5)} & 4.78{\tiny (5)} & 7.22{\tiny (5)} &
	4.78 & 4.76{\tiny (5)} & 4.78{\tiny (5)} & 7.42{\tiny (5)} \\
\multicolumn{2}{|l||}{$rand_{\alpha=4.00}$ (5)} & 5.54 & 5.60 & 5.60{\tiny (5)} & 7.27{\tiny (5)} & 7.65{\tiny (5)} &
	5.60 & 5.57{\tiny (5)} & 7.33{\tiny (5)} & 9.99{\tiny (5)} \\
\multicolumn{2}{|l||}{$rand_{\alpha=4.25}$ (5)} & 5.76 & 5.85 & 5.85{\tiny (5)} & 7.18{\tiny (5)} & 7.75{\tiny (5)} &
	5.82 & 5.77{\tiny (5)} & 7.21{\tiny (5)} & 10.35{\tiny (5)} \\
\multicolumn{2}{|l||}{$rand_{\alpha=4.50}$ (5)} & 5.97 & 5.98 & 5.98{\tiny (5)} & 7.47{\tiny (5)} & 7.82{\tiny (5)} &
	6.00 & 6.00{\tiny (5)} & 7.48{\tiny (5)} & 10.05{\tiny (5)} \\
\multicolumn{2}{|l||}{$rand_{\alpha=5.00}$ (5)} & 6.49 & 6.49 & 6.49{\tiny (5)} & 7.49{\tiny (5)} & 8.07{\tiny (5)} &
	6.49 & 6.48{\tiny (5)} & 7.53{\tiny (5)} & 10.52{\tiny (5)} \\
\multicolumn{2}{|l||}{$rand_{\alpha=7.00}$ (5)} & 7.01 & 7.02 & 7.02{\tiny (5)} & 7.68{\tiny (5)} & 9.03{\tiny (5)} &
	7.02 & 7.02{\tiny (5)} & 7.70{\tiny (5)} & 10.53{\tiny (5)} \\
\multicolumn{2}{|l||}{$rand_{\alpha=10.00}$(5)} & 7.35 & 7.35 & 7.35{\tiny (5)} & 7.93{\tiny (5)} & 9.66{\tiny (5)} &
	7.34 & 7.34{\tiny (5)} & 8.02{\tiny (5)} & 10.60{\tiny (5)} \\ 
\hline
\end{tabular}
\end{center}
\caption{Evolution of the fractal dimension $d$ of SAT Race 2010 and some
random formulas using VIG. $d_{orig}$ stands for the fractal dimension
of the original formula. $d_x$ stands for the fractal dimension of the
new formula generated adding the learnt clauses after $x$ decisions to
the original formula. $d_{x-rand}$ stands for the fractal dimension of
a formula generated adding to the original formula as random clauses
as learnt clauses, and of the same size. Numbers in brackets represent
the number of instance that are not still solved as UNSAT at that
depth.}
\label{tab:vig_decisions}
\end{table}

\begin{table}[t]
\begin{center}
\begin{tabular}{|c|l||r||r|r|r|r||r|r|r|r|r|}\hline
\multicolumn{2}{|c||}{Family}& 
\multicolumn{9}{c|}{Clause-Variable IG} \\
\multicolumn{2}{|c||}{(\#instanc.)} & 
$d^b_{orig}$ & $d^b_{10^2}$ & $d^b_{10^3}$ & $d^b_{10^4}$ &
$d^b_{10^5}$ & $d^b_{10^2-r}$ & $d^b_{10^3-r}$ &
$d^b_{10^4-r}$ & $d^b_{10^5-r}$ \\ \hline\hline
\multirow{3}{*}{\begin{sideways}cripto.\end{sideways}}
 & desgen(4)& 2.08&  2.09 & 2.08{\tiny (4)}&  2.09{\tiny (4)} & 2.41{\tiny (4)} &
		2.08&  2.08{\tiny (4)} & 3.24{\tiny (4)} &  6.08{\tiny (4)}\\
 & md5gen(3)& 2.07& 2.07 & 2.07{\tiny (3)} & 2.08{\tiny (3)} & 2.23{\tiny (2)} &
	 2.07& 2.07{\tiny (3)} & 3.84{\tiny (3)} & 6.29{\tiny (2)}\\
 & mizh(8) & 2.37& 2.21 & 2.22{\tiny (8)} & 2.22{\tiny (8)} & 2.27{\tiny (8)} &
	2.21 & 2.22{\tiny (8)} & 2.22{\tiny (8)} & 2.70{\tiny (8)}\\\hline
\multirow{3}{*}{\begin{sideways}hard. ver.\end{sideways}}
 & ibm(4) &  2.20& 2.11 & 2.12{\tiny (3)} &  2.12{\tiny (3)} & 2.12{\tiny (2)} &
	2.11 & 2.09{\tiny (3)} &  2.11{\tiny (3)} & 2.65{\tiny (2)}\\
 & manolios(16) & 1.96 & 1.96 & 1.96{\tiny (16)} & 1.96{\tiny (16)} & 2.00{\tiny (14)} &
	1.96 & 1.97{\tiny (16)} & 2.03{\tiny (16)} & 3.00{\tiny (14)}\\
 & velev(10) & 4.89 & 4.88 & 4.88{\tiny (10)} & 4.89{\tiny (10)} & 4.86{\tiny (10)} &
	4.88 & 4.88{\tiny (10)} & 4.88{\tiny (10)} & 5.34{\tiny (10)}\\\hline
\multirow{6}{*}{\begin{sideways}mixed\end{sideways}}
 & anbulagan(8) & 2.49 & 2.65 & 2.64{\tiny (8)} & 2.52{\tiny (8)} & 2.39{\tiny (8)} &
	2.64 & 2.62{\tiny (8)} & 2.50{\tiny (8)} & 5.45{\tiny (8)}\\
 & bioinf(6) & 2.75 & 2.67 & 2.63{\tiny (6)} & 2.80{\tiny (5)} & 3.09{\tiny (5)} &
	2.68 & 3.37{\tiny (6)} & 4.59{\tiny (5)} & 6.60{\tiny (5)}\\
 & diagnosis(4) & 2.71& 2.82 & 2.81{\tiny (4)} & 2.81{\tiny (4)} & 2.81{\tiny (4)} &
	2.79& 2.76{\tiny (4)} & 2.75{\tiny (4)} & 3.14{\tiny (4)}\\
 & grieu(3) & 6.26& 6.24 & 6.34{\tiny (3)} & 7.56{\tiny (3)} & 8.00{\tiny (3)} &
	6.21 & 6.52{\tiny (3)} & 7.76{\tiny (3)} & 8.05{\tiny (3)}\\
 & jarvisalo(1) & 2.17& 2.19& 2.40{\tiny (1)} & 3.32{\tiny (1)} & 3.23{\tiny (1)} &
	2.34& 4.66{\tiny (1)} & 4.96{\tiny (1)} & 4.98{\tiny (1)}\\
 & palacios(3) & 1.98 & 2.01 & 2.01{\tiny (3)} & 2.02{\tiny (3)} & 2.04{\tiny (3)} &
	2.01 & 2.01{\tiny (3)} & 2.26{\tiny (3)} & 4.07{\tiny (3)}\\\hline
\multirow{5}{*}{\begin{sideways}soft. ver.\end{sideways}}
 & babic(2) & 1.89 & 2.02 & 2.02{\tiny (2)} & 2.02{\tiny (2)} & 2.02{\tiny (2)} &
	2.02 & 2.02{\tiny (2)} & 2.03{\tiny (2)} & 2.09{\tiny (2)}\\
 & bitverif(5) & 2.27 & 2.32 & 2.22{\tiny (5)} & 2.20{\tiny (5)} & 2.22{\tiny (5)} &
	2.31 & 2.34{\tiny (5)} & 2.90{\tiny (5)} & 4.44{\tiny (5)}\\
 & fuhs(4) & 2.22 & 2.20 & 2.23{\tiny (4)} & 2.40{\tiny (4)} & 2.74{\tiny (4)} &
	2.19 & 2.53{\tiny (4)} & 4.21{\tiny (4)} & 6.43{\tiny (4)}\\
 & nec(17) & 2.09 & 2.07 & 2.08{\tiny (14)} & 2.11{\tiny (2)} & - {\tiny (0)} &
	2.07 & 2.08{\tiny (14)} & 2.17{\tiny (2)} & - {\tiny (0)}\\\hline
\hline
\multicolumn{2}{|l||}{$rand_{\alpha=1.00}$ (5)} & 1.58 & 1.79 & 1.79{\tiny (5)} & 1.79{\tiny (5)} & 1.79{\tiny (5)} &
	1.79& 1.79{\tiny (5)} & 1.79{\tiny (5)} & 1.79{\tiny (5)} \\
\multicolumn{2}{|l||}{$rand_{\alpha=2.00}$ (5)} & 2.29 & 2.31 & 2.31{\tiny (5)} & 2.31{\tiny (5)} & 2.31{\tiny (5)} &
	2.31 & 2.31{\tiny (5)} & 2.31{\tiny (5)} & 2.31{\tiny (5)} \\
\multicolumn{2}{|l||}{$rand_{\alpha=3.00}$ (5)} & 2.68 & 2.68& 2.68{\tiny (5)} & 2.69{\tiny (5)} & 2.84{\tiny (5)} &
	2.68& 2.69{\tiny (5)} & 2.68{\tiny (5)} & 3.22{\tiny (5)} \\
\multicolumn{2}{|l||}{$rand_{\alpha=4.00}$ (5)} & 2.97& 2.97 & 2.97{\tiny (5)} & 3.11{\tiny (5)} & 3.47{\tiny (5)} &
	2.97 & 2.97{\tiny (5)} & 3.18{\tiny (5)} & 5.41{\tiny (5)} \\
\multicolumn{2}{|l||}{$rand_{\alpha=4.25}$ (5)} & 3.03 & 3.03 & 3.03{\tiny (5)} & 3.17{\tiny (5)} & 3.56{\tiny (5)} &
	3.03 & 3.03{\tiny (5)} & 3.25{\tiny (5)} & 5.91{\tiny (5)} \\
\multicolumn{2}{|l||}{$rand_{\alpha=4.50}$ (5)} & 3.09 & 3.09& 3.09{\tiny (5)} & 3.26{\tiny (5)} & 3.68{\tiny (5)} &
	3.09& 3.09{\tiny (5)} & 3.34{\tiny (5)} & 5.80{\tiny (5)} \\
\multicolumn{2}{|l||}{$rand_{\alpha=5.00}$ (5)} & 3.20 & 3.20 & 3.20{\tiny (5)} & 3.39{\tiny (5)} & 3.90{\tiny (5)} &
	3.20 & 3.20{\tiny (5)} & 3.48{\tiny (5)} & 6.47{\tiny (5)} \\
\multicolumn{2}{|l||}{$rand_{\alpha=7.00}$ (5)} & 3.55& 3.55 & 3.55{\tiny (5)} & 3.78{\tiny (5)} & 4.34{\tiny (5)} &
	3.55 & 3.55{\tiny (5)} & 4.00{\tiny (5)} & 6.68{\tiny (5)} \\
\multicolumn{2}{|l||}{$rand_{\alpha=10.00}$(5)} & 3.93 & 3.93 & 3.93{\tiny (5)} & 4.27{\tiny (5)} & 4.71{\tiny (5)} &
	3.93 & 3.93{\tiny (5)} & 4.70{\tiny (5)} & 6.92{\tiny (5)} \\
\hline
\end{tabular}
\end{center}
\caption{Evolution of the fractal dimension $d^b$ of SAT Race 2010 and some
random formulas using CVIG. $d^b_{orig}$ stands for the fractal
dimension of the original formula. $d^b_x$ stands for the fractal
dimension of the new formula generated adding the learnt clauses after
$x$ decisions to the original formula. $d^b_{x-rand}$ stands for the
fractal dimension of a formula generated adding to the original
formula as random clauses as learnt clauses, and of the same
size. Numbers in brackets represent the number of instance that are
not still solved as UNSAT at that depth.}\label{tab:cvig_decisions}
\end{table}


\end{document}